\DeclareMathOperator*{\argmin}{argmin}
\newcolumntype{H}{>{\setbox0=\hbox\bgroup}c<{\egroup}@{}}
\begin{document}

\title{Learning Contracting Vector Fields\\ For Stable Imitation Learning}


\author{\authorblockN{Vikas Sindhwani}
\authorblockA{Google Brain\\
New York City, NY 10011\\
sindhwani@google.com}
\and
\authorblockN{Stephen Tu}
\authorblockA{University of California Berkeley\\
Berkeley, CA 94720\\
stephent@berkeley.edu}
\and
\authorblockN{Mohi Khansari}
\authorblockA{X\\
Mountain View, CA 94043\\
khansari@x.team}}


%

\maketitle
\newtheorem{thm}{Theorem}[section]
\newtheorem{lem}{Lemma}[section]
\newtheorem{prop}{Proposition}
\newtheorem{defn}{Definition}[section]
\newtheorem{condition}{Condition}

\newcommand{\transpose}{{\scriptscriptstyle T}}
\newcommand{\reals}{\mathbb{R}}
\newcommand{\biggram}{\overrightarrow{\mathbf{K}}}
\newcommand{\biggramsep}{{\vK \otimes \vL}}
\newcommand{\veckernel}{\overrightarrow{k}}
\newcommand{\vecf}{\overrightarrow{f}}
\newcommand{\chvec}{\ch_{\veckernel}}
\newcommand{\chvecsep}{\ch_{k\vL}}
\newcommand{\chdict}{\ch_{\cd}}
\newcommand{\expect}{\mathbb{E}}
\newcommand{\integers}{\mathbf{Z}}
\newcommand{\naturals}{\mathbf{N}}
\newcommand{\rationals}{\mathbf{Q}}

\newcommand{\ca}{\mathcal{A}}
\newcommand{\cb}{\mathcal{B}}
\newcommand{\cc}{\mathcal{C}}
\newcommand{\cd}{\mathcal{D}}
\newcommand{\ce}{\mathcal{E}}
\newcommand{\cf}{\mathcal{F}}
\newcommand{\cg}{\mathcal{G}}
\newcommand{\ch}{\mathcal{H}}
\newcommand{\ci}{\mathcal{I}}
\newcommand{\cj}{\mathcal{J}}
\newcommand{\ck}{\mathcal{K}}
\newcommand{\cl}{\mathcal{L}}
\newcommand{\cm}{\mathcal{M}}
\newcommand{\cn}{\mathcal{N}}
\newcommand{\co}{\mathcal{O}}
\newcommand{\cp}{\mathcal{P}}
\newcommand{\cq}{\mathcal{Q}}
\newcommand{\calr}{\mathcal{R}}
\newcommand{\cs}{\mathcal{S}}
\newcommand{\ct}{\mathcal{T}}
\newcommand{\cu}{\mathcal{U}}
\newcommand{\cv}{\mathcal{V}}
\newcommand{\cw}{\mathcal{W}}
\newcommand{\cx}{\mathcal{X}}
\newcommand{\cy}{\mathcal{Y}}
\newcommand{\cz}{\mathcal{Z}}
\newcommand{\pr}{\mathbb{P}}
\newcommand{\predsp}{\cy}  
\newcommand{\outsp}{\cy}

\newcommand{\prxy}{P_{\cx \times \cy}}
\newcommand{\prx}{P_{\cx}}
\newcommand{\prygivenx}{P_{\cy\mid\cx}}
\newcommand{\ex}{\mathbb{E}}
\newcommand{\var}{\textrm{Var}}
\newcommand{\cov}{\textrm{Cov}}
\newcommand{\kl}{\textrm{KL}}
\newcommand{\law}{\mathcal{L}}
\newcommand{\as}{\textrm{ a.s.}}
\newcommand{\io}{\textrm{ i.o.}}
\newcommand{\ev}{\textrm{ ev.}}
\newcommand{\convd}{\stackrel{d}{\to}}
\newcommand{\eqd}{\stackrel{d}{=}}
\newcommand{\del}{\nabla}
\newcommand{\loss}{V}
\newcommand{\risk}{R}
\newcommand{\emprisk}{\hat{R}_{\ell}}
\newcommand{\lossfnl}{\risk}
\newcommand{\emplossfnl}{\emprisk}
\newcommand{\empminimizer}[1]{\hat{#1}_{\ell}} 
\newcommand{\empminimizerfa}{\hat{f}_{\ell}^{1}}
\newcommand{\empminimizerfb}{\hat{f}_{\ell}^{2}}
\newcommand{\minimizer}[1]{#1_{*}} 
\newcommand{\etal}{\textrm{et. al.}}
\newcommand{\rademacher}[1]{\calr_{#1}}
\newcommand{\emprademacher}[1]{\hat{\calr}_{#1}}

\newcommand{\trace}{\operatorname{trace}}
\newcommand{\rank}{\text{rank}}
\newcommand{\linspan}{\text{span}}
\newcommand{\spn}{\text{span}}
\newcommand{\proj}{\text{Proj}}

\newcommand{\bfx}{\mathbf{x}}
\newcommand{\bfy}{\mathbf{y}}
\newcommand{\bfl}{\mathbf{\lambda}}
\newcommand{\bfm}{\mathbf{\mu}}
\newcommand{\calL}{\mathcal{L}}
 
\newcommand{\vX}{\mathbf{X}}
\newcommand{\vY}{\mathbf{Y}}
\newcommand{\vA}{\mathbf{A}}
\newcommand{\vR}{\mathbf{R}}
\newcommand{\vB}{\mathbf{B}}
\newcommand{\vE}{\mathbf{E}}
\newcommand{\vK}{\mathbf{K}}
\newcommand{\vD}{\mathbf{D}}
\newcommand{\vU}{\mathbf{U}}
\newcommand{\vL}{\mathbf{L}}
\newcommand{\vI}{\mathbf{I}}
\newcommand{\vC}{\mathbf{C}}
\newcommand{\vV}{\mathbf{V}}
\newcommand{\vQ}{\mathbf{Q}}
\newcommand{\vM}{\mathbf{M}}
\newcommand{\vT}{\mathbf{T}}
\newcommand{\vS}{\mathbf{S}}
\newcommand{\vN}{\mathbf{N}}
\newcommand{\vZ}{\mathbf{Z}}
\newcommand{\vsig}{\mathbf{\Sigma}}

\newcommand{\vw}{\boldsymbol{w}}
\newcommand{\vx}{\mathbf{x}}
\newcommand{\vxi}{\boldsymbol{\xi}}     
\newcommand{\valpha}{\boldsymbol{\alpha}}
\newcommand{\vbeta}{\boldsymbol{\beta}}
\newcommand{\veta}{\boldsymbol{\eta}}
\newcommand{\vsigma}{\boldsymbol{\sigma}}
\newcommand{\vepsilon}{\boldsymbol{\epsilon}}
\newcommand{\vnu}{\boldsymbol{\nu}}
\newcommand{\vd}{\boldsymbol{d}}
\newcommand{\vs}{\boldsymbol{s}}
\newcommand{\vt}{\boldsymbol{t}}
\newcommand{\vh}{\boldsymbol{h}}
\newcommand{\ve}{\boldsymbol{e}}
\newcommand{\vf}{\boldsymbol{f}}
\newcommand{\vg}{\boldsymbol{g}}
\newcommand{\vz}{\boldsymbol{z}}
\newcommand{\vk}{\boldsymbol{k}}
\newcommand{\va}{\boldsymbol{a}}
\newcommand{\vb}{\boldsymbol{b}}
\newcommand{\vv}{\boldsymbol{v}}
\newcommand{\vr}{\mathbf{r}}
\newcommand{\vy}{\mathbf{y}}
\newcommand{\vu}{\mathbf{u}}
\newcommand{\vp}{\mathbf{p}}
\newcommand{\vq}{\mathbf{q}}
\newcommand{\vc}{\mathbf{c}}
\newcommand{\vW}{\boldsymbol{W}}
\newcommand{\vP}{\boldsymbol{P}}
\newcommand{\vG}{\boldsymbol{G}}
\newcommand{\vH}{\boldsymbol{H}}

\newcommand{\hil}{\ch}
\newcommand{\rkhs}{\hil}
\newcommand{\manifold}{\cm} 
\newcommand{\cloud}{\cc} 
\newcommand{\graph}{\cg}    
\newcommand{\vertices}{\cv} 
\newcommand{\coreg}{{\scriptscriptstyle \cc}}
\newcommand{\intrinsic}{{\scriptscriptstyle \ci}}
\newcommand{\ambient}{{\scriptscriptstyle \ca}}
\newcommand{\isintrinsic}[1]{#1^{\scriptscriptstyle \ci}}
\newcommand{\isambient}[1]{#1^{\scriptscriptstyle \ca}}
\newcommand{\hilamb}{\ch^\ambient}
\newcommand{\hilintr}{\ch^\intrinsic}
\newcommand{\M}{\text{FIX ME NOW}}  
\newcommand{\ktilde}{\tilde{k}}
\newcommand{\corkhs}{\tilde{\hil}}
\newcommand{\cok}{\ktilde}
\newcommand{\coK}{\tilde{K}}
\newcommand{\copar}{\lambda}
\newcommand{\reg}{\gamma}
\newcommand{\rega}{\gamma_{1}}
\newcommand{\regb}{\gamma_{2}}
\newcommand{\regamb}{\gamma_{\ambient}}
\newcommand{\regintr}{\gamma_{\intrinsic}}
\newcommand{\regfn}{\Omega}
\newcommand{\regfnintr}{\regfn_{\intrinsic}}
\newcommand{\regfnamb}{\regfn_{\ambient}}
\newcommand{\regfncoreg}{\regfn_{\coreg}}
\newcommand{\bq}{\begin{equation}}
\newcommand{\eq}{\end{equation}}
\newcommand{\ba}{\begin{eqnarray}}
\newcommand{\ea}{\end{eqnarray}}

\newcommand{\spana}{\cl^{1}}
\newcommand{\spanb}{\cl^{2}}
\newcommand{\ha}{\hil^{1}}
\newcommand{\hb}{\hil^{2}}
\newcommand{\fa}{f^{1}}
\newcommand{\fb}{f^{2}}
\newcommand{\Fa}{\cf^{1}}
\newcommand{\Fb}{\cf^{2}}
\newcommand{\ka}{k^{1}}
\newcommand{\kb}{k^{2}}
\newcommand{\vcok}{\boldsymbol{\cok}}
\newcommand{\vka}{\vk^{1}}
\newcommand{\vkb}{\vk^{2}}
\newcommand{\Ka}{K^{1}}
\newcommand{\Kb}{K^{2}}
\newcommand{\kux}{\vk_{Ux}}
\newcommand{\kuz}{\vk_{Uz}}
\newcommand{\kuu}{K_{UU}}
\newcommand{\kul}{K_{UL}}
\newcommand{\klu}{K_{LU}}
\newcommand{\kll}{K_{LL}}
\newcommand{\kuxa}{\vk_{Ux}^{1}}
\newcommand{\kuza}{\vk_{Uz}^{1}}
\newcommand{\Kuua}{K_{UU}^{1}}
\newcommand{\Kula}{K_{UL}^{1}}
\newcommand{\Klua}{K_{LU}^{1}}
\newcommand{\Klla}{K_{LL}^{1}}
\newcommand{\kuxb}{\vk_{Ux}^{2}}
\newcommand{\kuzb}{\vk_{Uz}^{2}}
\newcommand{\Kuub}{K_{UU}^{2}}
\newcommand{\Kulb}{K_{UL}^{2}}
\newcommand{\Klub}{K_{LU}^{2}}
\newcommand{\Kllb}{K_{LL}^{2}}

\newcommand{\Ksum}{S}
\newcommand{\ksum}{s}
\newcommand{\vksum}{\vs}

\newcommand{\intrinsicRegMat}{M_{\intrinsic}}
\newcommand{\coregPointCloudMat}{M_{\coreg}}
\newcommand{\vid}[1]{#1^\text{vid}}
\newcommand{\aud}[1]{#1^\text{aud}}
\newcommand{\bad}[1]{#1_\text{bad}}
\newcommand{\empcompat}{\hat{\chi}}

\newcommand{\nn}{\ensuremath{k}}

\newcommand{\uva}{\ushort{\va}}
\newcommand{\uvf}{\ushort{\vf}}
\newcommand{\uvg}{\ushort{\vg}}
\newcommand{\uvk}{\ushort{\vk}}
\newcommand{\uvw}{\ushort{\vw}}
\newcommand{\uvh}{\ushort{\vh}}
\newcommand{\uvbeta}{\ushort{\vbeta}}
\newcommand{\uA}{\ushort{A}}
\newcommand{\uG}{\ushort{G}}

\def\la{{\langle}}
\def\ra{{\rangle}}
\def\R{{\reals}}
\def\psdcone{{\cs^{n}_{+}}}
\def\spectrahedron{{\psdcone(\tau)}}
\def\spectrahedrona{{\psdcone(1)}}

%




\newtheoremstyle{style}
 {\topsep}              
 {\topsep}              
 {\itshape}              
 {}                         
 {\sffamily\bfseries}  
 {.}	                     
 { }                         
 {}                          
\theoremstyle{style}

\floatname{algorithm}{\sffamily\bfseries Algorithm} 

\newcommand{\bigO}{O}
\newcommand{\X}{\mathcal{X}}
\newcommand{\Y}{\mathcal{Y}}
\newcommand{\Sym}{\mathbb{S}}
\newcommand{\Spectahedron}{\mathcal{S}}
\newcommand{\SpectahedronLeOne}{\mathcal{S}_{\le 1}} 
\newcommand{\SpectahedronLeT}{\mathcal{S}_{t}} 
\newcommand{\MaxCutPolytope}{\boxplus}
\newcommand{\FourPointPSD}{\Spectahedron^4_{\text{sparse}}}
\newcommand{\FourPointPSDplus}{\Spectahedron^{4+}_{\text{sparse}}}
\newcommand{\FourPointPSDminus}{\Spectahedron^{4-}_{\text{sparse}}}
\newcommand{\LOneBall}{\diamondsuit}
\newcommand{\signVec}{\mathbf{s}}
\newcommand{\N}{\mathbb{N}}
\newcommand{\id}{\mathbf{I}} 
\newcommand{\ind}{\mathbf{1}} 
\newcommand{\0}{\mathbf{0}} 
\newcommand{\unit}{\mathbf{e}} 
\newcommand{\one}{\mathbf{1}} 
\newcommand{\zero}{\mathbf{0}}
\newcommand\SetOf[2]{\left\{#1\,\vphantom{#2}\right.\left|\vphantom{#1}\,#2\right\}}
\newcommand{\ignore}[1]{\textbf{***begin ignore***}\\#1\textbf{***end ignore***}}

\newcommand{\todo}[1]{\marginpar[\hspace*{4.5em}\textbf{TODO}\hspace*{-4.5em}]{\textbf{TODO}}\textbf{TODO:} #1}
\newcommand{\note}[1]{\marginpar{#1}}


\newcommand{\inner}[2]{\ensuremath{\langle{#1},{#2}\rangle}}
\newcommand{\lmp}[2]{\ensuremath{\ell_{#2}^{#1}}}
\newcommand{\mapto}{\ensuremath{\rightarrow}}
\newcommand{\nmapto}{\ensuremath{\nrightarrow}}
\newcommand{\approach}{\ensuremath{\rightarrow}}
\newcommand{\imply}{\ensuremath{\Rightarrow}}
\newcommand{\inject}{\ensuremath{\hookrightarrow}}
\newcommand{\equivalent}{\ensuremath{\Longleftrightarrow}}
\newcommand{\inclusion}{\ensuremath{\hookrightarrow}}

\newcommand{\ol}[1]{\ensuremath{\overline{#1}}}

\newtheorem{definition}{Definition}
\newtheorem{proposition}{Proposition}
\newtheorem{lemma}{Lemma}
\newtheorem{corollary}{Corollary}
\newtheorem{example}{Example}
\newtheorem{remark}{Remark}
\newtheorem{theorem}{Theorem}
\newtheorem{observation}{Observation}
\newtheorem{hypothesis}{Hypothesis}
\newtheorem{notation}{Notation}

\newcommand{\A}{\mathcal{A}}
\newcommand{\B}{\mathcal{B}}

\newcommand{\G}{\mathbf{G}}
\newcommand{\D}{\mathbf{D}}
\newcommand{\bfC}{\mathbf{C}}
\newcommand{\bfW}{\mathbf{W}}
\newcommand{\bfS}{\mathbf{S}}
\newcommand{\bfD}{\mathbf{D}}
\newcommand{\bfH}{\mathbf{H}}
\newcommand{\bfA}{\mathbf{A}}
\newcommand{\bfG}{\mathbf{G}}

\newcommand{\bbC}{\mathbb{C}}

\newcommand{\bP}{\mathbb{P}}
\newcommand{\bE}{\mathbb{E}}
\newcommand{\bC}{\mathbb{C}}

\newcommand{\C}{\mathcal{C}}

\newcommand{\T}{\mathcal{T}}
\renewcommand{\O}{\mathcal{O}}

\newcommand{\Q}{\mathcal{Q}}

\renewcommand{\P}{\mathcal{P}}
\renewcommand{\S}{\mathcal{S}}

\newcommand{\F}{\mathcal{F}}
\renewcommand{\H}{\mathcal{H}}
\renewcommand{\L}{\mathcal{L}}

\newcommand{\Fre}{Fr\'echet \;}
\newcommand{\Ga}{G\^ateaux \;}

\def\diag{{\rm diag}}
\def\diam{{\rm diam}}
\def\rank{{\rm rank}}
\def\cond{{\rm cond}}

\def\supp{{\rm supp}}
\def\sinc{{\rm sinc}}

\def\argmin{{\rm argmin}}

\def\Im{{\rm Im}}

\def\proj{{\rm proj}}

\def\trace{{\rm tr}}
\def\loc{{\rm loc}}
\def\vec{{\rm vec}}
\def\nullspace{{\rm nullspace}}
\def\colspace{{\rm colspace}}
\def\rowspace{{\rm rowspace}}

\def\curl{{\rm curl}}
\def\div{{\rm div}}
\def\har{{\rm har}}

\newcommand{\h}{\mathbf{h}}
\newcommand{\x}{\mathbf{x}}
\newcommand{\s}{\mathbf{s}}
\newcommand{\w}{\mathbf{w}}
\newcommand{\z}{\mathbf{z}}
\renewcommand{\a}{\mathbf{a}}
\renewcommand{\c}{\mathbf{c}}
\renewcommand{\v}{\mathbf{v}}
\newcommand{\e}{\mathbf{e}}
\newcommand{\n}{\mathbf{n}}

\newcommand{\y}{\mathbf{y}}
\newcommand{\f}{\mathbf{f}}

\newcommand{\p}{\mathbf{p}}

\newcommand{\vecpi}{\overrightarrow{\pi}}
\newcommand{\veci}{\overrightarrow{i}}
\newcommand{\vecj}{\overrightarrow{j}}
\newcommand{\veck}{\overrightarrow{k}}

\newcommand{\1}{\mathbf{1}}

\renewcommand{\u}{\mathbf{u}}

\newcommand{\norm}[1]{\| #1 \|}
\newcommand{\comment}[1]{}
\begin{abstract}
We propose a new non-parametric framework for learning incrementally stable dynamical systems $\dot{\x} = f(\x)$ from a set of sampled trajectories.  We construct a rich family of smooth vector fields induced by certain classes of matrix-valued kernels, whose equilibria are placed exactly at a desired set of locations and whose local contraction and curvature properties at various points can be explicitly controlled using convex optimization. With curl-free kernels, our framework may also be viewed as a mechanism to learn potential fields and gradient flows. We develop large-scale techniques using randomized kernel approximations in this context. We demonstrate our approach, called contracting vector fields (CVF), on imitation learning tasks involving complex point-to-point human handwriting motions.
\end{abstract}

\IEEEpeerreviewmaketitle

\section{Introduction}

\begin{figure}[h]
    \centering
    \includegraphics[height=4.2cm, width=\linewidth]{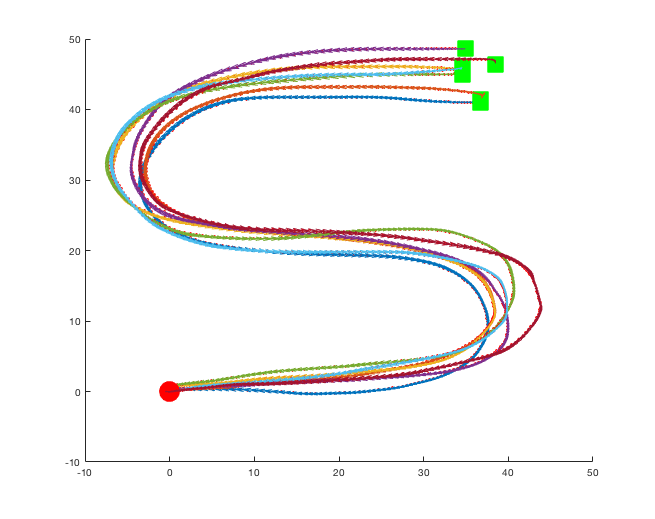}\\
    \includegraphics[height=5.0cm, width=\linewidth]{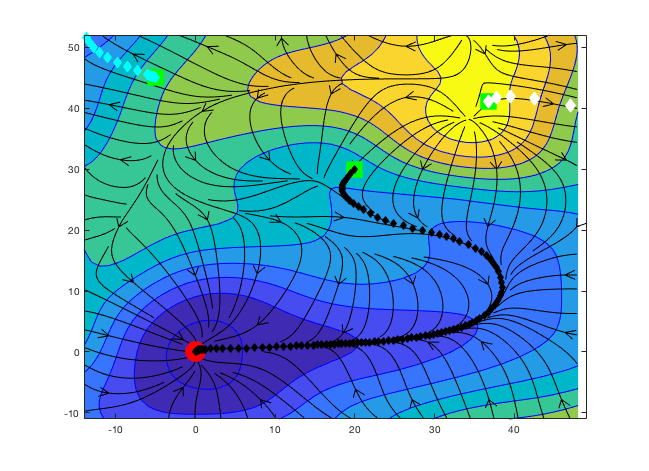}\\
    \includegraphics[height=5.0cm, width=\linewidth]{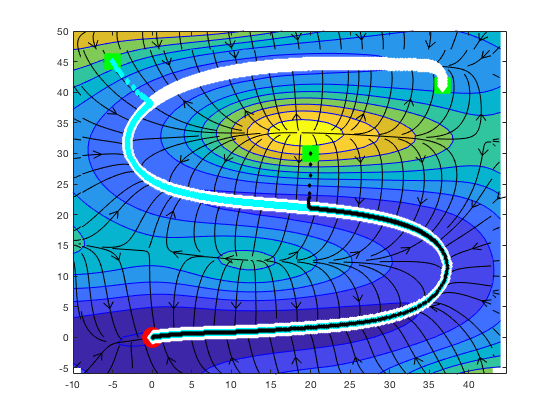}
    \caption{"S"-shape demonstration trajectories (top), naive regression-based vector-field (middle) and incrementally stable vector field learnt using our methods (bottom), with corresponding trajectories from three initial points (green squares).}\vspace{-0.25cm}
    \label{fig:start}
\end{figure}

Consider an unknown continuous-time autonomous nonlinear dynamical system evolving in $\reals^n$, $$\dot{\x} = \frac{d\x}{dt} = f(\x), ~~~\x\in \reals^n, ~~~f: \reals^n \to \reals^n$$ Starting from an initial condition $\x_0$, a trajectory $\x(t, \x_0)$ is generated by integrating the system over a time horizon. Let $\x^i_\star, i=1\ldots k$ be $k$ equilibrium points where  the induced vector field vanishes, i.e., $\dot{\x}^i_\star = f(\x^i_\star) = 0$. In this paper, we are interested in learning $f$ given desired equilibria and a small set of $N$ trajectories, $\{\x(t, \x^i_0),~~ t=0\ldots T_i,~i=1\ldots N\}$, sampled from the unknown system.

Figure~\ref{fig:start} grounds the problem stated above in an imitation learning setting. Shown in the top panel are $7$ human handwriting demonstrations recorded on a Tablet PC~\cite{lasa}. The motions start from the green points in $\reals^2$ and end at the origin (red), tracing an ``S" shape with a specific velocity profile. The learning-from-demonstrations (LfD) problem is to induce a control law from such data in order to drive a robotic system, typically the pose of an end-effector, to imitate the demonstration while reaching the goal of the motion. As succinctly summarized in~\cite{CLFDM}, modeling desired motion as the evolution of an underlying dynamical system at the kinematic level allows ``having robotic systems that have inherent adaptivity to changes in a dynamic environment, and that can swiftly adopt a new path to reach the target". This has motivated a large body of work on dynamical systems based imitation learning~\cite{khansari2011learning, khansari2017learning, harish, CLFDM, ijspeert2013dynamical}.
 
At first glance, the problem of learning a dynamical system from sampled trajectories appears to be a simple regression task, e.g., minimizing $\sum_{i, t} \|f(\x^i_t) - \dot{\x}^i_t\|^2_2$ over a suitable choice of regression models. However, a naive regression approach may be woefully inadequate, as shown in the middle panel of Figure~\ref{fig:start}: as soon as the initial conditions are even slightly different from those encountered during training, the evolution of the learnt system diverges away from the desired behavior.  The bottom panel of Figure~\ref{fig:start} shows an incrementally stable~\cite{jouffroy2010tutorial,lohmiller1998contraction} vector field learnt by methods developed in this paper: this vector field is a gradient flow on a learnt potential field; it exactly vanishes at the origin - the goal of the motion - and it sets up a region of stability around the demonstrated trajectories in order to better generalize across starting conditions and to ``pull"  perturbations encountered during execution back to the desired behavior. 

\subsection{Preview}
We give a sketch of our formulation to set the stage and introduce some notation. Given desired equilibria $Z = \{\x^i_*, i=1\ldots k\}$ and pairs $\{(\x^i_t, \dot{\x}^i_t), t=0\ldots T_i, i=1\ldots N\}$ extracted from the training trajectories, we set up the following optimization problem over a suitable non-parametric family, $\H^Z$, of vector-valued maps vanishing on $Z$,

\begin{eqnarray}
\min_{f:\reals^n\to\reals^n \in {\cal H}^Z} &\sum_{i, t} \|f(\x^i_t) - \dot{\x}_t^i\|^2_2 + \lambda \|f\|^2_{\H^Z} \label{eq:regression_objective}\\
\textrm{subject to}:&~\frac{1}{2}\Big[\mathbf{J}_f(\x^i_t) +  \mathbf{J}_f(\x^i_t)^T\Big] \preceq -\tau(\x^i_t) \mathbf{I},~\forall i, t\label{eq:stability_constraints}
\end{eqnarray}
where $\mathbf{J}_f = \frac{\partial f}{\partial \x}$ denotes the Jacobian of the vector-valued map $f$ and the notation $\mathbf{A} \preceq -\tau \mathbf{I}$ implies that the matrix $\mathbf{A}$ is negative definite with eigenvalues no larger than $-\tau$ for some $\tau>0$.  This optimization problem has the following ingredients, which we will expand on in later sections.

\begin{itemize}
\setlength{\itemsep}{1pt}
\setlength{\labelsep}{0pt}
  \setlength{\parskip}{0pt}
  \setlength{\parsep}{0pt}
\item The first term in the objective in Eqn.~\ref{eq:regression_objective} uses a least squares criterion to orient the vector field along the training trajectories. The second term controls smoothness of the vector field. $\lambda >0$ is a regularization parameter that balances these terms.
\item The constraints in Eqn~\ref{eq:stability_constraints} enforce incremental stability.  These constraints help induce a contraction tube around a nominal trajectory so that system evolution from a large set of initial conditions returns to desired behavior. In section~\ref{sec:contraction}, we provide a brief background on incremental stability and contraction analysis of dynamical systems, to motivate such constraints. 
\item The optimization problem above is solved over rich non-parametric hypothesis spaces of smooth vector-valued functions denoted by ${\cal H}$.  In particular, our construction of these hypothesis spaces is rooted in the theory of vector-valued Reproducing Kernel Hilbert Spaces (RKHS)~\citep{micchelli2005learning} which are generated by matrix-valued kernel functions.  For certain curl-free kernels~\cite{curlfree,micheli2013matrix}, the resulting vector field is actually a gradient flow. In other words, $f = -\nabla V$ for an induced smooth potential field $V$. 
\item RKHS properties can be used to construct a subspace of ${\H}$, denoted by ${\cal H}^Z$, of vector-valued functions that exactly vanish on a set of points $Z$. The optimization problem above is solved over ${\cal H}^Z$ where $Z$ is the set of desired equilibrium points. RKHS properties also imply a Representer Theorem~\cite{scholkopf2001learning} which specifies the form  of the optimal vector field, and reduces the optimization problem above to a finite dimensional convex optimization problem.
\item By using a random feature approximation to matrix-valued kernels~\cite{rahimi2008random, minh2016operator}, we are able to significantly improve training time and integration speed of the learnt dynamical system.
\end{itemize}
We view our primary contribution as bringing together the theory of vector-valued RKHS, contraction analysis and convex optimization to bear on the problem of learning stable nonlinear dynamical systems. Empirical results reported in section~\ref{sec:experiments} on a standard imitation learning benchmark confirm that our methods are competitive with several prior proposals for estimating stable dynamical systems from sampled trajectories.



 
\section{Background: Stability of Dynamical Systems}

\subsection{Notions of Stability and Lyapunov Analysis} Whether or not an explicitly given dynamical system is stable with respect to an equilibrium has been a foundational question in nonlinear control~\cite{slotine1991applied}.   A system is said to be {\it globally asymptotically stable} if solution trajectories $\x(t; \x_0)$ from any starting point $\x_0$ converge to $\x_\star$ as $t \to \infty$. The system is {\it locally asymptotically stable} if there is a ball of radius $r$ around $\x_*$ from where all initial states flow to $\x_*$.  Lyapunov's direct method~\citep{slotine1991applied} is a classical framework for verifying such stability properties of nonlinear dynamical systems.  If a suitable positive-definite scalar function, $V(\x)$, can be found that decreases along the trajectories of the system, then the evolution of the system can be thought of as continuously dissipating a generalized notion of energy,  eventually reaching an equilibrium point as a consequence -- much like a ball rolling down a mountainous landscape to the deepest point in a valley. In mathematical terms, energy dissipation is stated as follows: if a dynamical system $\dot{\x} = f(\x)$ can be associated with a function $V(\x)$ that has a local or global minimum at ${\x}^\star$ and whose time derivative is negative everywhere or in the vicinity of $\x^\star$, i.e., 
\begin{equation}
\dot{V}(\x) = \frac{dV(\x(t))}{dt} = \nabla V(\x)^T f(\x)  <  0\label{eq:energy_decrease}
\end{equation} then the system is certified to be locally or globally stable respectively. Converse Lyapunov theorems prove existence of Lyapunov functions for stable systems~\citep{slotine1991applied}, but despite these existence results, Lyapunov theory is largely unconstructive even when the dynamics is explicit; it does not prescribe how to find Lyapunov functions for verifying the stability of a given general nonlinear system. For a review of various methods, we refer the reader to~\cite{giesl2015review}.  A few special cases where the construction is well-understood are worth mentioning. Stable linear systems admit quadratic Lyapunov functions that can be found via via linear algebraic techniques. If a polynomial dynamical system admits a polynomial Lyapunov function, then one can search for it using sum-of-squares techniques~\citep{ahmadi2012algebraic} which reduce to instances of semidefinite programming (SDP).  However, it is also known that there exist stable dynamical systems for which no polynomial Lyapunov function, or sum-of-squares Lyapunov certificate exists~\citep{ahmadi2012algebraic}.

\subsection{Incremental Stability and Contraction Analysis} \label{sec:contraction} Stronger notions of stability called {\it incremental stability} and associated contraction analysis tools~\cite{jouffroy2010tutorial,lohmiller1998contraction} are concerned with the convergence of system trajectories with respect to each other, as opposed to stability with respect to a single single equilibrium. Contraction analysis derives sufficient conditions under which the displacement between any two trajectories $\x(t; \x_0)$ and $\x(t; \x_1)$ starting from initial conditions $\x_0, \x_1$ will go to zero. If $f$ is continuously differentiable, then $\dot{\x} = f(\x)$ implies the differential relation,
$$
\dot{\delta \x} = \mathbf{J}_f(\x) {\delta \x}~~~\textrm{where}~~\mathbf{J}_f = \frac{\partial f}{\partial \x}
$$ The object $\delta \x$, referred to as virtual displacement, is to be thought of as infinitesimal spatial displacement between neighboring trajectories at a fixed time. The rate of change of the corresponding infinitesimal squared distance, $\delta\x^T\delta\x$, can be expressed as,
$$
\frac{d}{dt} (\delta\x^T \delta\x) = 2 \delta\x^T \dot{\delta\x} = \delta\x^T \mathbf{J}_f(\x) \delta\x
$$
Hence, if the symmetric part of Jacobian of $f$ at $\x$ is negative definite, then the distance between neighboring trajectories shrinks. In particular, if the following condition, which inspires Eqn.~\ref{eq:stability_constraints}, holds for some smooth $\tau(x) > 0$,
\begin{equation}
\frac{1}{2}\Big[\mathbf{J}_f(\x) + \mathbf{J}_f(\x)^T\Big] \preceq -\tau(\x) \mathbf{I}\label{eq:contraction}
\end{equation} then the following is implied,
$$
\frac{d}{dt} (\delta\x^T \delta\x)  \leq -2\tau(\x) \delta\x^T \delta\x
$$  Integrating both sides yields,
$$
\|\delta\x_t\|^2_2 \leq \|\delta\x_0\| e^{-\int_{0}^t \tau(\x) dt}
$$
Hence, any infinitesimal length $\|\delta\x\|$ converges exponentially to zero as time goes to infinity. This implies that in a contraction region, i.e., the set of $\x$'s where Eqn.~\ref{eq:contraction} holds, trajectories will tend to together converge towards a nominal path. If the entire state-space is contracting and a finite equilibrium exists,
then this equilibrium is unique and all trajectories converge
to this equilibrium.
 
While this exposition suffices for our needs in this paper, it should be noted that contraction theory~\cite{lohmiller1998contraction} more broadly considers generalized distances of the form $\delta\x^T \mathbf{M}(\x)\delta\x$ induced by a symmetric, positive definite matrix function $\mathbf{M}(\x)$. The search for a contraction metric may be interpreted as the search for a Lyapunov function of the specific form $V(\x) = f(\x)^T \mathbf{M}(x) f(\x)$.  As is the case with Lyapunov analysis in general, finding such an incremental stability certificate for a given dynamical system is a nontrivial problem; see~\cite{aylward2008stability} and references therein. 

\subsection{Prior Work in Learning Stable Dynamical Systems}

In a dynamical systems approach to feedback control, robot motion during a task (for example reaching a cup) is formulated as a differential equation rather than a time-indexed trajectory. 
Compared to classical approaches based on following a time-indexed trajectory, such representation allows better generalization since instead of memorizing the demonstration trajectories, the policy has to capture the essential dynamics underlying the task during training. Additionally a dynamical systems policy can, by construction, adapt to changes in dynamic environments, making it suitable for use in unstructured environments \cite{billard_08}. These properties have paved the way for dynamical systems policy to be widely used for robot learning. In this section we provide a brief overview of three dynamical systems methods which are used for comparison in our experimental study. We refer interested readers to \cite{ijspeert2013dynamical, khansari2017learning, lemme2015open, argall_survey_09, billard_08} for a more thorough overview of works in this field.

Dynamic Movement Primitives (DMPs) are the most widely used dynamical systems approaches which have been used both for imitation learning and reinforcement learning \cite{ijspeert2013dynamical}. The dynamical system defined by DMP is composed of two main terms: a nonlinear term to accurately encode a given demonstration, and a linear term that acts as a PD controller. These two terms are coupled through a phase variable. Global stability is ensured by smoothly switching from the non-linear term to the stable linear term via the phase variable. The phase variable in a DMP make it a time varying system which depending on the application may make the system sensitive to perturbations. In addition, DMPs can only be trained from one demonstration one degree-of-freedom at a time, and hence they do not directly benefit from multiple training demonstrations with correlated dimensions.

Stable Estimator of Dynamical Systems (SEDS) \cite{khansari2011learning} is another widely used approach for learning a nonlinear dynamical systems from a set of demonstrations. SEDS uses a Guassian mixture model to represent the policy and imposes constraints on means and covariance of Guassian mixtures to ensure Global asymptotic stability of the trained model. The stability criteria in SEDS is derived based on a simple quadratic Lyapunov function. SEDS can only model trajectories whose distances to the target decrease monotonically in time.

Control Lyapunov Function-based Dynamic Movements (CLF-DM) \cite{CLFDM} is another approach that is inspired from control theory to stabilize a learned dynamical systems. CLF-DM learns a parametric Lyapunov function from a set of given demonstrations. It then uses any of the-state-of-the art regression techniques to learn an (unstable) dynamical systems from the demonstrations. Finally it uses the learned control Lyapunov function to derive a control command to stabilize the learned (unstable) dynamical systems.

SEDS and CLFDM involve non-convex optimization for dynamics fitting and constructing Lyapunov functions respectively, and are hence prone to sub-optimal local minima.


\section{Learning Contracting Vector Fields}

The problem of estimating smooth vector fields in $\reals^n$ can be naturally formulated in terms of Tikhonov regularization in a {\it vector-valued} Reproducing Kernel Hilbert Space (RKHS)~\citep{micchelli2005learning, alvarez2012kernels}.  The theory and formalism of vector-valued RKHS can be traced as far back as the
work of Laurent Schwarz in 1964~\citep{schwartz} with applications ranging from solving partial differential equations to machine learning~\citep{sindhwani2012scalable}. They may be viewed as a systematic generalization of scalar kernel methods~\citep{scholkopf2001learning} more familiar in machine learning. 

\subsection{Vector Fields generated by Matrix-valued Kernels}

To be an RKHS, any Hilbert Space $\H$ of vector fields in $\reals^n$ must satisfy a natural continuity criteria as given in the definition below. 

\begin{definition}
We say that $\H$ is a Reproducing Kernel Hilbert Space of vector fields in $\reals^n$ if for any $\v\in \reals^n$, the linear functional that maps $f\in \H$ to $\v^T f(\x)$ is continuous.
\end{definition}

Any RKHS vector field over $\reals^n$ can be associated with   a {\it matrix-valued} kernel function $K: \reals^n \times \reals^n \mapsto \reals^{n\times n}$. In other words, for any inputs $\x, \y$ in $\reals^n$, $K(\x, \y)$ returns an $n\times n$ matrix. Valid kernel functions are {\it positive} in the specific sense that for any finite set of points $\{\x_i \in \reals^n \}_{i=1}^{l}$, the $ln \times ln$ Gram matrix of $K$ defined by the $n\times n$ blocks, $\G_{ij} = K(\x_i, \x_j),~1\leq i,j \leq l$,  is positive definite. We have the following characterization.

\begin{definition}
A Hilbert space $\H$ of vector-valued functions mapping $\reals^n \rightarrow \reals^n$, with inner product denoted by $\la \cdot, \cdot \ra_{\H}$, is a Reproducing Kernel Hilbert Space (RKHS) if there is a positive matrix-valued function $K: \reals^n \times \reals^n \rightarrow \reals^{n\times n}$ such that for all $\x, \y \in \reals^n$,
\begin{enumerate}
\item The vector-valued map, $K(\cdot, \x) \y$ defined by $\z \to K(\z, \x) \y \in \H$.
\item For all $f \in \H$, the \textbf{reproducing property} holds
\begin{equation}\label{equation:reproducing2}
\la f, K(., \x) \y\ra_{\H} =  f(\x)^T \y
\end{equation}
\end{enumerate}
$K$ is called the reproducing kernel for  $\H$.
\end{definition} 

Conversely, any kernel $K$ uniquely determines an RKHS which admits $K$ as the reproducing kernel. This RKHS, denoted by $\H_K$, is defined to be the completion of the linear span of functions $\{ K(\cdot, \x) \y,~~\x,\y\in \reals^n\}$ with inner product given by, $\la \sum_i K(\cdot, \x_i) \boldsymbol{\alpha}_i, \sum_j K(\cdot, \z_j) \boldsymbol{\beta}_j \ra_{\H_K} = \sum_{i,j} \boldsymbol{\alpha}_i^T K(\x_i, z_j) \boldsymbol{\beta}_j$.

Due to the reproducing property, as in the scalar case, standard learning problems in a vector-valued RKHS can be turned into finite dimensional optimization problems using a natural matrix-vector generalization of the classical Representer theorem~\cite{scholkopf2001learning}.
%

%
\begin{theorem}[Representer Theorem]
The optimal solution to any vector field learning problem of the form,
$$
f^* = \argmin_{f\in \H_K} L(f(\x_1)\ldots  f(\x_l)) + \lambda \|f\|^2_{{\H}_K}, 
$$ is a sum of matrix-vector products of the form,
\begin{equation}
f^*(\x) = \sum_{i=1}^l K(\x, \x_i) \boldsymbol{\alpha}_i~~\label{eq:representer}
\end{equation} where $\boldsymbol{\alpha}_i \in \reals^n,~~i=1\ldots l$,  $L$ is an arbitrary loss function (which can also be an indicator function encoding arbitrary constraints) and $\lambda>0$ is a regularization parameter.
\end{theorem}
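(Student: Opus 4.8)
The plan is to use the classical orthogonal-decomposition argument, adapted to the matrix-valued setting. First I would introduce the finite-dimensional subspace
$$
\H_0 = \spn\{ K(\cdot, \x_i)\v : i = 1,\ldots,l,\ \v \in \reals^n \} \subseteq \H_K,
$$
spanned by the kernel sections at the data points taken in every coordinate direction; it is closed, being finite-dimensional of dimension at most $ln$. By the projection theorem for Hilbert spaces, any candidate $f \in \H_K$ then admits a unique decomposition $f = f_\parallel + f_\perp$ with $f_\parallel \in \H_0$ and $f_\perp \in \H_0^\perp$. The whole proof hinges on showing that replacing $f$ by $f_\parallel$ leaves the loss term untouched while never increasing the regularizer.

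For the loss term, fix a point $\x_j$ and an arbitrary direction $\v \in \reals^n$. By the reproducing property of Eqn.~\ref{equation:reproducing2},
$$
\v^T f(\x_j) = \la f, K(\cdot, \x_j)\v \ra_{\H_K} = \la f_\parallel, K(\cdot, \x_j)\v \ra_{\H_K} + \la f_\perp, K(\cdot, \x_j)\v \ra_{\H_K}.
$$
Since $K(\cdot, \x_j)\v \in \H_0$ by construction and $f_\perp \perp \H_0$, the last inner product vanishes, giving $\v^T f(\x_j) = \v^T f_\parallel(\x_j)$. As this holds for every $\v$, I conclude $f(\x_j) = f_\parallel(\x_j)$ for all $j$, hence $L(f(\x_1),\ldots,f(\x_l)) = L(f_\parallel(\x_1),\ldots,f_\parallel(\x_l))$. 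For the regularizer, Pythagoras gives $\|f\|^2_{\H_K} = \|f_\parallel\|^2_{\H_K} + \|f_\perp\|^2_{\H_K} \ge \|f_\parallel\|^2_{\H_K}$, with equality iff $f_\perp = 0$. Combining the two, the objective at $f_\parallel$ is no larger than at $f$, and strictly smaller unless $f_\perp = 0$; so every minimizer lies in $\H_0$. Finally, using bilinearity of the matrix-vector product to merge, for each fixed $i$, all directional contributions $K(\cdot,\x_i)\v$ into a single $\boldsymbol{\alpha}_i \in \reals^n$, every element of $\H_0$ takes the form $f(\x) = \sum_{i=1}^l K(\x,\x_i)\boldsymbol{\alpha}_i$, which is exactly Eqn.~\ref{eq:representer}.

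The main obstacle, and the one genuinely new feature relative to the scalar Representer theorem, is the vector-valued bookkeeping in the step forcing $f(\x_j) = f_\parallel(\x_j)$: it is essential that $\H_0$ is spanned by $K(\cdot,\x_i)\v$ over \emph{all} directions $\v$, not merely a single fixed vector, so that equality of the scalar functionals $\v^T f(\x_j)$ for every $\v$ upgrades to equality of the full $\reals^n$-valued evaluations. Two technical points keep the argument clean: because $L$ may be an extended-real-valued indicator, the identity $L(f(\x_1),\ldots) = L(f_\parallel(\x_1),\ldots)$ also preserves feasibility, so the result covers constrained problems; and existence of a minimizer (when $L$ is lower semicontinuous with nonempty feasible set) follows since the problem reduces to the finite-dimensional $\H_0$, where coercivity of the $\lambda\|\cdot\|^2_{\H_K}$ term supplies a minimizer by standard compactness.
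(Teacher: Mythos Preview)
Your argument is correct and is exactly the standard orthogonal-decomposition proof of the Representer Theorem, carried over cleanly to the matrix-valued setting. Note, however, that the paper does not actually supply its own proof of this statement: it presents the theorem as a ``natural matrix-vector generalization of the classical Representer theorem'' with a citation to~\cite{scholkopf2001learning}, and moves on. So there is nothing to compare against beyond the implicit reference to the scalar case, and your proof is precisely the expected elaboration of that reference; the one point you handle explicitly that the scalar proof leaves implicit---spanning $\H_0$ over all directions $\v$ so that the reproducing identity upgrades from scalar projections to full vector evaluations---is the only place the vector-valued version requires any care, and you have it right.
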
 
When the learning problem involves Jacobian evaluations, as in the main optimization problem of interest in Eqn.~\ref{eq:regression_objective}-\ref{eq:stability_constraints}, we need an extended Representer Theorem along the lines of Theorem 1 in~\cite{zhou2008derivative}.
\begin{theorem}[Form of Optimal Contracting RKHS Vector Field]
The optimal solution to any vector field learning problem of the following form (includes Eqn.~\ref{eq:regression_objective}-\ref{eq:stability_constraints}),
$$
f^* = \argmin_{f\in \H} L(f(\x_1)\ldots  f(\x_l); J_f(\x'_1)\ldots J_f(\x'_m)) + \lambda \|f\|^2_{{\H}_K}, 
$$ is a sum of matrix-vector products of the form,
\begin{equation}
f^*(\x) = \sum_{i=1}^l K(\x, \x_i) \boldsymbol{\alpha}_i + \sum_{j=1}^m \sum_{k=1}^n \frac{\partial K(\x, \x'_j)}{\partial x_j} \boldsymbol{\beta}_{ik} ~~\label{eq:representer3}
\end{equation} where $\boldsymbol{\alpha}_i, \boldsymbol{\beta}_{ik}  \in \reals^n$,  $L$ is an arbitrary loss function (which can also be an indicator function encoding arbitrary constraints) and $\lambda>0$ is a regularization parameter.
\end{theorem}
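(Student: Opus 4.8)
The plan is to run the classical orthogonal-projection argument behind the Representer Theorem, but with the spanning set of \emph{representers} enlarged so that it includes not only the kernel sections $K(\cdot,\x_i)\y$ used in Theorem~1, but also their partial derivatives in the second argument. The latter are what will absorb the Jacobian terms $J_f(\x'_j)$ on which the loss $L$ depends. Throughout I assume $K$ is continuously differentiable, which is exactly the smoothness the contraction constraints in Eqn.~\ref{eq:stability_constraints} already require.

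First I would establish a \emph{derivative reproducing property} for the matrix-valued kernel, the vector-valued analogue of Theorem~1 in~\cite{zhou2008derivative}. Under the smoothness assumption, for each direction $b\in\{1,\dots,n\}$ and each $\v\in\reals^n$ the map $\x'\mapsto K(\cdot,\x')\v$ is differentiable in the $\H$-norm, so $\partial_{x'_b}[K(\cdot,\x')\v]\in\H$ and differentiation may be exchanged with the inner product. Differentiating the reproducing property $\v^T f(\x')=\la f,K(\cdot,\x')\v\ra_{\H}$ in $x'_b$ then gives
$$
\frac{\partial (\v^T f)}{\partial x_b}(\x') = \Big\la f,\ \frac{\partial}{\partial x'_b}\big[K(\cdot,\x')\v\big]\Big\ra_{\H}.
$$
Since $\v^T J_f(\x')\w=\sum_b w_b\,\partial_{x_b}(\v^T f)(\x')$, every scalar functional of the Jacobian $J_f(\x'_j)$ is realized as an $\H$-inner product of $f$ against an element of the form $[\partial_{x'_b}K(\cdot,\x'_j)]\v$. (Here the notation $\partial K(\cdot,\x'_j)/\partial x_j$ in Eqn.~\ref{eq:representer3} is to be read as differentiation with respect to the second argument.)

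Next I would form the finite-dimensional subspace $\mathcal{V}\subseteq\H$ spanned by
$$
\big\{\,K(\cdot,\x_i)\e_a\,\big\}_{i,a}\ \cup\ \big\{\,[\partial_{x'_b}K(\cdot,\x'_j)]\e_a\,\big\}_{j,a,b},
$$
with $\e_a$ the standard basis of $\reals^n$, and decompose any $f\in\H$ orthogonally as $f=f_\parallel+f_\perp$ with $f_\parallel\in\mathcal{V}$, $f_\perp\perp\mathcal{V}$. Orthogonality of $f_\perp$ to each $K(\cdot,\x_i)\e_a$ forces $f_\perp(\x_i)=0$, so $f(\x_i)=f_\parallel(\x_i)$; orthogonality to each $[\partial_{x'_b}K(\cdot,\x'_j)]\e_a$ together with the derivative reproducing property forces $J_{f_\perp}(\x'_j)=0$, so $J_f(\x'_j)=J_{f_\parallel}(\x'_j)$. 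Hence $L$ depends on $f$ only through $f_\parallel$, while $\|f\|^2_{\H}=\|f_\parallel\|^2_{\H}+\|f_\perp\|^2_{\H}\ge\|f_\parallel\|^2_{\H}$ with equality iff $f_\perp=0$. Thus passing from $f$ to $f_\parallel$ never increases the objective and strictly decreases it unless $f_\perp=0$, so the minimizer lies in $\mathcal{V}$. Writing out a generic element of $\mathcal{V}$, collecting the $\e_a$-combinations at each $\x_i$ into $\boldsymbol{\alpha}_i\in\reals^n$ via $\sum_a c_{ia}K(\x,\x_i)\e_a=K(\x,\x_i)\boldsymbol{\alpha}_i$, and for each $\x'_j$ and direction $k$ collecting the $\e_a$-combinations into $\boldsymbol{\beta}_{jk}\in\reals^n$, recovers exactly the claimed form in Eqn.~\ref{eq:representer3} (with $\boldsymbol{\beta}_{jk}$ in place of the $\boldsymbol{\beta}_{ik}$ of the statement).

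The main obstacle is the first step. The classical Representer Theorem needs only boundedness of point evaluation, which is built into the RKHS definition; here I must additionally show that derivative evaluation is a bounded functional \emph{represented by an element of $\H$}, and justify the interchange of $\partial_{x'_b}$ with $\la\cdot,\cdot\ra_{\H}$. This is precisely where continuous differentiability of $K$ is used, and the matrix-valued setting means the property must be verified through the scalar sections $\v^T f$ and $K(\cdot,\x')\v$ rather than for a single scalar kernel. Everything after that is standard projection bookkeeping, the only care being the correct indexing of the $\boldsymbol{\beta}_{jk}$ across the $n$ derivative directions.
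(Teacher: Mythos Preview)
Your proposal is correct and is precisely the argument the paper has in mind: the paper does not give its own proof of this theorem but simply points to it as an extended Representer Theorem ``along the lines of Theorem 1 in~\cite{zhou2008derivative}''. The orthogonal-projection strategy you outline---establishing the derivative reproducing property for the matrix-valued kernel, enlarging the spanning set to include the partial-derivative sections $[\partial_{x'_b}K(\cdot,\x'_j)]\e_a$, and then running the standard decomposition $f=f_\parallel+f_\perp$---is exactly the Zhou-style proof the paper defers to, so there is nothing to contrast. Your flagging of the index typo (the statement's $\boldsymbol{\beta}_{ik}$ should be $\boldsymbol{\beta}_{jk}$, and the derivative is in the second argument's $k$-th coordinate) is also accurate.
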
 
Eqn.~\ref{eq:representer3} implies that the optimization problem in Eqn.~\ref{eq:regression_objective}-\ref{eq:stability_constraints} can be reduced to a finite dimensional regression problem involving Linear Matrix Inequalities (LMI) over the variables $\boldsymbol{\alpha}_i, \boldsymbol{\beta}_{ik}$. In section~\ref{sec:random_embeddings}, we use randomized low-rank approximations to the kernel function to develop a scalable solver.   

\subsection{Choice of Matrix-valued Kernels}
In this paper, we will consider two choices of matrix valued kernels:
\begin{itemize}
    \item {\it Gaussian Separable Kernels}, $K_\sigma$, defined by the scalar Gaussian kernel $k_{\sigma}(x, y) = e^{-\frac{\|\x-\y\|^2_2}{2\sigma^2}}$ times the $n\times n$ identity matrix, 
    \begin{equation}
    K_\sigma(\x, \y) = k_{\sigma}(x, y)\mathbf{I}\label{eq:gaussian_separable}
    \end{equation} For this choice, each individual component of the vector field $f=(f_1\ldots f_n)$ belongs to the scalar RKHS ${\cal H}_k$ associated with the standard Gaussian kernel. More generally, one may consider separable matrix-valued kernels~\cite{sindhwani2012scalable} of the form $K(\x, \y) = e^{-\frac{\|\x-\y\|^2_2}{2\sigma^2}} \mathbf{L}$ for a positive definite $n\times n$ matrix $\mathbf{L}$.
    \item {\it Curl-free Kernels}~\cite{micheli2013matrix, curlfree} are defined by the Hessian of the scalar Gaussian kernel,
    \begin{equation}
    K_{cf}(\x, \y) = \frac{1}{\sigma^2} e^{\frac{-\|\x - \y \|^2}{2\sigma^2}} \left[\boldsymbol{I} - \frac{(\x - \y) (\x - \y)^T}{\sigma^2} \right] \label{eq:curlfree_gaussian}
    \end{equation}
    This choice is interesting because vector fields in the associated RKHS are curl-free and can be interpreted as gradient flows with respect to a potential field $V$, i.e.  $$\dot{\x} = f(\x) = -\nabla V(\x).$$ Consequently, the Jacobian of $f$, $\boldsymbol{J}_f = -\nabla^2 V$, at any $\x$ is symmetric being the Hessian of $-V$.  Following~\cite{micheli2013matrix}, we derive a formula for $V$ in the following proposition,
    \begin{prop} Let $f\in {\cal H}_{K_{cf}}$ have the form,
$$
f(\x) = \sum_{i=1}^l K_{cf}(\x, \x_i) \boldsymbol{\alpha}_i.
$$ Then, $f(\x) = \nabla V(\x)$ where $V: \reals^n \mapsto \reals$ has the form,
\begin{equation}
V(\x) = -\sum_{i=1}^l \nabla_{\x} k_\sigma(\x, \x_i)^T \boldsymbol{\alpha}_i\label{eq:potential}
\end{equation} 
\end{prop}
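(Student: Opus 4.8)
The plan is to verify the claim by direct computation, exploiting the fact that the curl-free kernel $K_{cf}$ is, up to sign, the Hessian of the scalar Gaussian kernel $k_\sigma$. Once this identification is made, showing $f = \nabla V$ reduces to differentiating the proposed potential term by term and matching the result against the representer expansion of $f$. Conceptually, the existence of \emph{some} potential is guaranteed a priori: on the simply connected domain $\reals^n$, a smooth vector field whose Jacobian is everywhere symmetric is a gradient field by the Poincar\'e lemma, and each summand $K_{cf}(\cdot,\x_i)\boldsymbol{\alpha}_i$ has symmetric Jacobian precisely because $K_{cf}$ is a Hessian. The content of the proposition is therefore the \emph{explicit} formula for $V$, which the computation below exhibits.

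First I would establish the key identity $K_{cf}(\x,\y) = -\nabla_\x^2 k_\sigma(\x,\y)$. Writing $k_\sigma(\x,\y) = e^{-\|\x-\y\|^2/(2\sigma^2)}$, a first differentiation gives $\nabla_\x k_\sigma(\x,\y) = -\tfrac{1}{\sigma^2}(\x-\y)\,k_\sigma(\x,\y)$, and a second differentiation, applying the product rule to this vector-times-scalar expression, yields
\[
\nabla_\x^2 k_\sigma(\x,\y) = -\frac{k_\sigma(\x,\y)}{\sigma^2}\left[\mathbf{I} - \frac{(\x-\y)(\x-\y)^T}{\sigma^2}\right] = -K_{cf}(\x,\y),
\]
matching the definition in Eqn.~\ref{eq:curlfree_gaussian} up to the overall sign. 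This is the only genuinely computational step, and it is routine.

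Next I would differentiate the candidate potential $V(\x) = -\sum_{i=1}^l \nabla_\x k_\sigma(\x,\x_i)^T \boldsymbol{\alpha}_i$. Each summand is a scalar of the form $g_i(\x)^T\boldsymbol{\alpha}_i$ with $g_i(\x) = \nabla_\x k_\sigma(\x,\x_i)$, whose gradient is $(\partial g_i/\partial\x)^T\boldsymbol{\alpha}_i = \nabla_\x^2 k_\sigma(\x,\x_i)\,\boldsymbol{\alpha}_i$, where I use that the Hessian is symmetric. Interchanging the gradient with the finite sum and substituting the identity from the previous step gives
\[
\nabla V(\x) = -\sum_{i=1}^l \nabla_\x^2 k_\sigma(\x,\x_i)\,\boldsymbol{\alpha}_i = \sum_{i=1}^l K_{cf}(\x,\x_i)\,\boldsymbol{\alpha}_i = f(\x),
\]
as desired.

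I do not anticipate a substantive obstacle: the argument is essentially a two-line differentiation once the Hessian identity is in hand. The only points demanding care are bookkeeping ones, namely tracking the sign relating $K_{cf}$ to $\nabla_\x^2 k_\sigma$, and correctly invoking the symmetry of the Hessian when moving the gradient through the inner product $g_i(\x)^T\boldsymbol{\alpha}_i$. If anything, the ``hard part'' is purely expository: ensuring the sign conventions adopted here remain consistent with the gradient-flow statement $\dot{\x} = f = -\nabla V$ used in the surrounding text, since the proposition as stated produces $f = +\nabla V$ for the particular $V$ displayed in Eqn.~\ref{eq:potential}.
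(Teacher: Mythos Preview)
Your argument is correct and is precisely the natural direct computation: establish $K_{cf}(\x,\y) = -\nabla_\x^2 k_\sigma(\x,\y)$ and then differentiate $V$ termwise. The paper itself does not spell out a proof of this proposition---it merely states the formula, citing~\cite{micheli2013matrix}---so your derivation is exactly the computation the paper alludes to but omits, and your remarks on sign bookkeeping and the symmetry of the Hessian are appropriate.
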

\end{itemize}

\subsection{Subspace of RKHS Vector Fields that Vanish on a Point Set}\label{subsec:vanishing_rkhs}
We are now interested in constructing a family of vector fields that vanish at desired points; these points are desired equilibria of the dynamical system we wish the learn. Let $Z = \{\x^*_1, \ldots, \x^*_p\}$ be a set of points. Given ${\cal H}_K$, consider the subset of functions that vanish on $Z$, $${\cal H}^Z_K = \{f \in {\cal H}_K : f(\x^*_i) = 0 \in \reals^n, \x^*_i\in Z\}.$$  Infact, ${\cal H}^Z_K$ is a closed subspace of ${\cal H}_K$ and itself an RKHS associated a modified kernel function $K^Z$. We have the following result, a simple generalization of Theorem 116 in~\cite{berlinet2011reproducing}.
\begin{prop} ${\cal H}^Z_K \subseteq {\cal H}_K$ is an RKHS whose matrix-valued kernel is given by,
\begin{equation}
K^{Z}(\x, \y) = K(\x, \y) - K(\x, Z) K(Z, Z)^{-1} K(Z, \y)
\end{equation}
\end{prop}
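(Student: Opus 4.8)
The plan is to realize $\H^Z_K$ as the orthogonal complement in $\H_K$ of a finite-dimensional subspace, and then obtain $K^Z$ by orthogonally projecting the kernel sections of $K$ onto it. First I would reformulate the vanishing conditions using the reproducing property: for any $\x^*_i \in Z$ and any $\v \in \reals^n$, Eqn.~\ref{equation:reproducing2} gives $\v^T f(\x^*_i) = \la f, K(\cdot, \x^*_i)\v\ra_{\H_K}$, so $f(\x^*_i) = 0$ holds iff $f$ is orthogonal to every $K(\cdot, \x^*_i)\v$. Setting $\cn = \spn\{K(\cdot, \x^*_i)\v : i = 1\ldots p,\ \v \in \reals^n\}$, this shows $\H^Z_K = \cn^\perp$. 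Since $\cn$ is finite-dimensional (spanned by at most $np$ functions) and hence closed, $\H^Z_K$ is a closed subspace of $\H_K$, and is therefore a Hilbert space under the inherited inner product.

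Next I would invoke the standard fact that a closed subspace of an RKHS is itself an RKHS whose kernel sections are the projections of the ambient ones. Concretely, letting $P$ denote orthogonal projection onto $\H^Z_K$, for $f \in \H^Z_K$ we have $f = Pf$ and, using self-adjointness of $P$, $\v^T f(\y) = \la f, K(\cdot, \y)\v\ra_{\H_K} = \la f, P(K(\cdot,\y)\v)\ra_{\H_K}$. Hence $P(K(\cdot,\y)\v)$ reproduces within the subspace, so the reproducing kernel of $\H^Z_K$ is characterized by $K^Z(\cdot, \y)\v = P(K(\cdot,\y)\v)$. Because $\H^Z_K = \cn^\perp$, we have $P = I - P_\cn$, reducing the problem to computing the projection onto the finite-dimensional $\cn$.

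Then I would compute $P_\cn(K(\cdot,\y)\v)$ explicitly by solving the normal equations. Writing $P_\cn(K(\cdot,\y)\v) = \sum_{i} K(\cdot, \x^*_i)\boldsymbol{\gamma}_i$ and imposing orthogonality of the residual to each $K(\cdot, \x^*_j)\u$, the inner-product identity $\la K(\cdot, \z)\w, K(\cdot, \x)\u\ra_{\H_K} = \w^T K(\z,\x)\u$ turns these conditions into the block-linear system $K(Z,Z)\boldsymbol{\gamma} = K(Z,\y)\v$, where $K(Z,Z)$ is the $np \times np$ block Gram matrix and $K(Z,\y)$ the block column with entries $K(\x^*_j,\y)$. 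Solving gives $\boldsymbol{\gamma} = K(Z,Z)^{-1}K(Z,\y)\v$, so that $P_\cn(K(\cdot,\y)\v) = K(\cdot, Z)K(Z,Z)^{-1}K(Z,\y)\v$; substituting into $K^Z(\cdot,\y)\v = (I - P_\cn)(K(\cdot,\y)\v)$ and stripping the arbitrary $\v$ yields the claimed formula.

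The step requiring the most care is the block-matrix bookkeeping together with the invertibility of $K(Z,Z)$: the latter follows from the positive-definiteness of the Gram matrix assumed in the definition of a valid matrix-valued kernel, provided the equilibria in $Z$ are distinct. As a sanity check that closes the argument, I would verify directly that the resulting $K^Z(\cdot,\y)\v$ actually lies in $\H^Z_K$, i.e. vanishes on $Z$: since $K(\x^*_j, Z)K(Z,Z)^{-1}$ is exactly the $j$-th block selector, one has $K(\x^*_j, Z)K(Z,Z)^{-1}K(Z,\y) = K(\x^*_j, \y)$ and hence $K^Z(\x^*_j, \y) = 0$, confirming both membership in the subspace and consistency of the projection computation.
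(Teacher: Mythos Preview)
Your argument is correct and follows the standard route: identify $\H^Z_K$ as the orthogonal complement of the finite-dimensional span $\cn = \spn\{K(\cdot,\x^*_i)\v\}$, then obtain $K^Z$ by projecting kernel sections via the normal equations on the block Gram matrix $K(Z,Z)$. The bookkeeping is right, including the verification that $K^Z(\x^*_j,\y)=0$ and the caveat that invertibility of $K(Z,Z)$ requires the points of $Z$ to be distinct.

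As for comparison with the paper: the paper does not actually prove this proposition. It simply states the result as ``a simple generalization of Theorem 116 in~\cite{berlinet2011reproducing}'' and moves on. That theorem in Berlinet--Thomas-Agnan is the scalar-kernel version of exactly what you wrote (closed subspace of an RKHS defined by vanishing at finitely many points, kernel obtained by subtracting the Gram-inverse correction), and its proof proceeds along the same projection/normal-equations lines you lay out here, so your proposal is precisely the vector-valued generalization the paper alludes to but omits.
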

 
 Above we use the following notation: given any two sets of points $S = \{\x_i \in \reals^n \}_{i=1}^{l_{1}}$ and $S' = \{\y_i \in \reals^n \}_{i=1}^{l'}$, the Gram matrix of any matrix-valued kernel $K$ on $S, S'$, denoted by $K(S', S)$ is the $l'n \times ln$ matrix defined by the $n\times n$ blocks, $\G_{ij} = K(\y_i, \x_j) \in \reals^{n\times n}$.
 
Hence, we can start with any base matrix-valued kernel $K$, define $K^Z$ as above and use its associated RKHS as a space of vector fields that are guaranteed to vanish on $Z$, the desired set of equilibrium points.

\subsection{Faster Solutions using Random Feature Approximations}
\label{sec:random_embeddings}
The size of the problem using the full kernel expansion in Eqn.~\ref{eq:representer3} grows as $ln$, the number of demonstration data points times the dimensionality of the problem. This makes training slow for even moderately long demonstrations even in low-dimensional settings. More seriously, the learnt dynamical system is slow to evaluate and integrate at inference time. We now develop a practical solver using random feature approximations to kernel functions, that have been extensively used to scale up training complexity and inference speed of kernel methods~\cite{rahimi2008random, huang2014kernel} in a number of applications. These approximations have only recently been extended to matrix-valued kernels~\cite{minh2016operator,brault2016random}. 

Given a matrix-valued kernel $K$, the basic construction starts by defining a matrix-valued feature map $\Phi: \reals^n \to \reals^{D \times n}$ having the property that,
$$
K(\x, \y) \approx \Phi(\x)^T \Phi(\y)
$$ where $D$ controls the quality of the approximation. First note that armed with such an approximation one can reparameterize vector-valued RKHS maps as follows,
$$
f(\x) = \sum_{i=1}^l K(\x, \x_i) \boldsymbol{\alpha}_i \approx \sum_{i=1}^l \Phi(\x)^T \Phi(\x_i) \boldsymbol{\alpha_i} =  \Phi(\x)^T \theta,
$$  where $\theta = \sum_{i=1}^l \Phi(\x_i) \boldsymbol{\alpha_i} \in \reals^{D}$. Thus, instead of optimizing $ln$ variables $\{\boldsymbol{\alpha}_i \in \reals^n, i=1\ldots l\}$, we only need to optimize $D$ variables $\theta$. The choice of $D$ depends on the quality-time tradeoffs demanded by an application. We now define feature maps for approximating the kernels in Eqns.~\ref{eq:gaussian_separable} and~\ref{eq:curlfree_gaussian}.

\subsubsection{Matrix-valued Random Feature Maps for Gaussian Separable and Curl-free Kernels}

It is well known~\citep{rahimi2008random} that the random scalar feature map $\phi(\x) : \reals^n \longrightarrow \reals^s$ as
\begin{equation}
        \phi(\x) = \sqrt{\frac{2}{s}} \begin{bmatrix} \cos(\w_1^T \x + b_1) \\ \vdots \\ \cos(\w_s^T \x + b_s) \end{bmatrix} \:.
\end{equation}
where $\w_1, ..., \w_s$ are i.i.d. draws from ${\cal N}(0, \sigma^{-2} {\mathbf I})$, and $b_1, ..., b_s$ are i.i.d. draws from $\mathrm{Unif}[0, 2\pi]$, induces a low-rank approximation to the Gaussian kernel (with bandwidth $\sigma$). Other shift invariant kernels also admit such approximations. With some calculations, this immediately implies   matrix-valued feature map approximations, $$K_\sigma(\x, \y) \approx \Phi_{\sigma}(\x)^T \Phi_{\sigma}(\y),  K_{cf}(\x, \y) \approx \Phi_{cf}(\x)^T \Phi_{cf}(\y),$$ for the Gaussian Separable Kernel (Eqn.~\ref{eq:gaussian_separable}) and the Curl-free kernels (Eqn.~\ref{eq:curlfree_gaussian}) respectively whose formulae are given below,  

\begin{eqnarray}
\Phi_{\sigma}(\x) &=& \phi(\x) \otimes \mathbf{I} \label{eq:random_features_gs} \\
\Phi_{cf}(\x) &=& \sqrt{\frac{2}{D}} \begin{bmatrix} \sin(\w_1^T \x + b_1)\w_1^T \\ \vdots \\ \sin(\w_D^T \x + b_D) \w_D^T \end{bmatrix} \label{eq:random_features_cf}
\end{eqnarray} where $\otimes$ denotes Kronecker product.

\subsubsection{Random Features Vanishing on a Point Set}
 
In section~\ref{subsec:vanishing_rkhs}, we gave a recipe to go from a kernel $K$ to $K^Z$ in order to generate a subspace of vector fields that vanish on a set of desired equlibrium points $Z$. Analogously, in this section we define a procedure to a matrix-valued feature map $\Phi$ to $\Phi^Z$ such that $\Phi^{Z}(\x)$ vanishes on $Z$. 
 
For a set of points $X = (\x_1, ..., \x_l)$,
define
$$
\Phi(X) = \begin{bmatrix} \Phi(\x_1), ..., \Phi(\x_{X}) \end{bmatrix} \in \R^{D \times n l} 
$$
Since $K(\x, \y) \approx \Phi(\x)^T \Phi(\y)$ we have,
{\scriptsize 
\begin{align*}
        K^Z(\x, \y) &= K(\x, \y) - K(\x, Z) K(Z, Z)^{-1} K(Z, \y) \\
        &= \Phi(\x)^T \Phi(\y)-\Phi(\x)^T \Phi(Z) (\Phi(Z)^T \Phi(Z))^{-1} \Phi(Z)^T \Phi(\y) \\
        &= \Phi(\x)^T [ I - \Phi(Z) (\Phi(Z)^T \Phi(Z))^{-1} \Phi(Z)^T ] \Phi(\y) \\
        &= \Phi(\x)^T [ I - P_{\Phi(Z)} ] \Phi(\y) \\
        &= \Phi(\x)^T P^\perp_{\Phi(Z)} \Phi(\y) \:.
\end{align*}}
Above, $P_M$ denotes the orthogonal projector onto the range of $M$.
We can write $P^\perp_{\Phi(Z)} = LL^T$ for some $L \in \R^{D \times D}$.
Hence, we now define a new feature map as
\begin{align}
        \Phi^{Z}(\x) = L^T \Phi(\x) \:,
\end{align}
which satisfies the property that $K^Z(\x, \y) = \Phi^{Z}(\x)^T \Phi^{Z}(\y)$.
Note that despite the fact that the kernel $K^Z(\x, \y)$ is not
shift-invariant, this particular construction inherits
the ability to be expressed as a low-rank feature map while guaranteeing that $\Phi^{Z}(\x)$ vanishes on $Z$.
 
\subsection{Regression with LMI Constraints}
\label{sec:lmi_solver}
Using matrix-valued random feature approximation to kernels, the vector field we seek to learn has the form,
\begin{equation}
\dot{\x} = \Phi^Z(\x)^T \theta = \sum_{i=1}^D \Phi^Z_i(\x) \theta_i
\end{equation} where $\Phi^Z(\x)^T = [\Phi_1^Z(\x)\ldots \Phi_d^Z(\x)],~~\Phi^T_i: \reals^n \mapsto \reals^n$. Let $\boldsymbol{J}_{\Phi^Z_i}$ denote the $n\times n$ Jacobian matrix of $\Phi^Z_i$. Then, the optimization problem in Eqns.~\ref{eq:regression_objective}-\ref{eq:stability_constraints} reduces to,
\begin{eqnarray}
\min_{\theta\in \reals^D} \sum_{i, t} \|\Phi^Z(\x^i_t) - \dot{\x}^i_t\|^2_2 &+& \lambda \|\theta\|^2_2\\
\textrm{subject to}:~~~~~~~~~~~~~~~~~~~~~~~~&&\nonumber\\
\frac{1}{2} \sum_{j=1}^D \Big[\boldsymbol{J}_{\Phi^Z_j}(\x^i_t) + \boldsymbol{J}^T_{\Phi^Z_j}(\x^i_t)\Big] \theta_j &\preceq& -\tau(\x^i_t) \boldsymbol{I}\label{eq:stability_constraints2}
\end{eqnarray}

We solve the regression problem with LMI constraints above using an  ADMM-based first order method for large-scale convex cone programs, implemented in Splitting Conic Solver (SCS)~\cite{o2016conic} (with its backend configured to use direct linear system solvers). Note that the contraction constraints in Eqn.~\ref{eq:stability_constraints2} may be enforced only a subsample of points. Slack variables may be added to ensure feasibility.

%


\section{Empirical Analysis}\label{sec:experiments}

\subsection{Imitating Human Handwriting Motions}
We evaluate our methods on the LASA library of two-dimensional human handwriting motions commonly used for benchmarking dynamical systems based movement generation techniques in imitation learning settings~\cite{khansari2017learning, lemme2015open, harish}. This dataset contains $30$ handwriting motions recorded with a pen input on a Tablet PC.  For each motion, the user was asked to draw 7  demonstrations of a desired pattern, by starting from different initial positions and ending at the same final point. Each demonstration trajectory comprises of $1000$ position ($\x$) and velocity ($\dot{\x}$) measurements.  We report comparisons on a subset of $4$ shapes: {\it Angle, CShape, GShape} and {\it JShape$_2$} shown in Figure~\ref{fig:datasets}, together with statistics on average speed (s mm per second), movement duration (T seconds) and position (pos-dev) and velocity (speed-dev) deviation about the average of human demonstrations (reported in the title of the plots).

\begin{figure}[h]
    \centering
    \includegraphics[height=8cm, width=\linewidth]{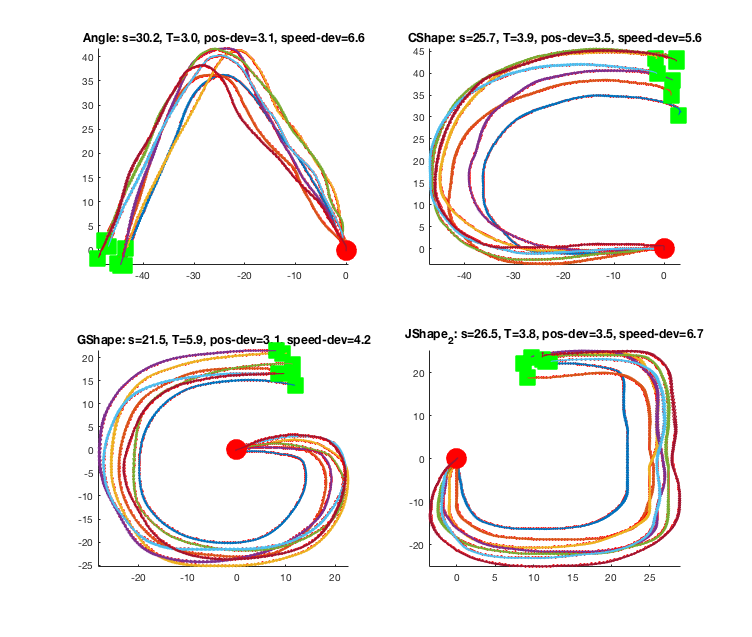}
    \caption{7 demonstrations for 4 shapes}.
    \label{fig:datasets}
\end{figure}

\subsection{Qualitative Results}
In Figure~\ref{fig:contraction_tube}, we show the region of contraction associated with a contracting vector field (curl-free) learnt on a sampled version of ``S" shape (Figure~\ref{fig:start}) with $\tau=100$.  The contours correspond to the largest eigenvalue of the symmetrized Jacobian of the learnt vector field in a grid around the demonstrations.

\begin{figure}[h]
\includegraphics[height=5cm,width=\linewidth]{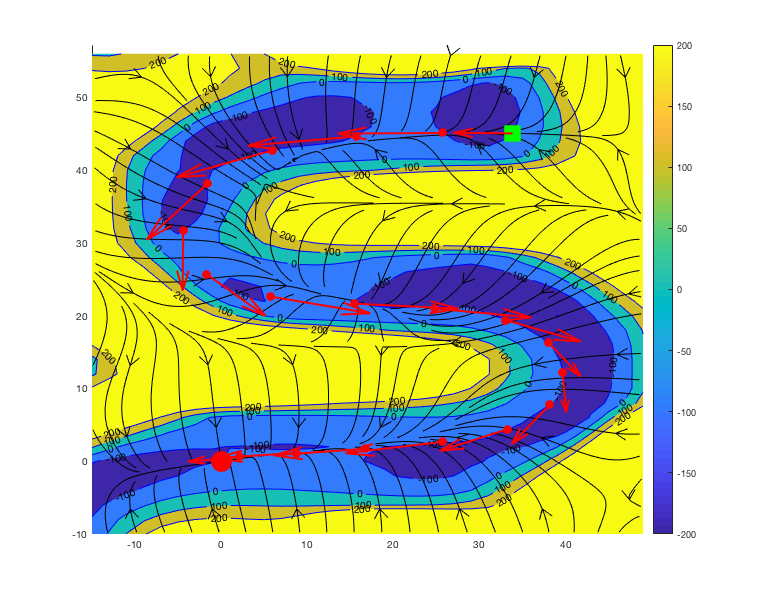}
\caption{Contracting Vector Field $\dot{\x} = f(\x)$ and associated contraction tube learnt on ``S"-shape data; contours correspond to the largest eigenvalue of the Jacobian of $f$.}\label{fig:contraction_tube}
\end{figure}

\begin{figure*}[t]
    \centering
     \includegraphics[height=4cm, width=0.24\linewidth]{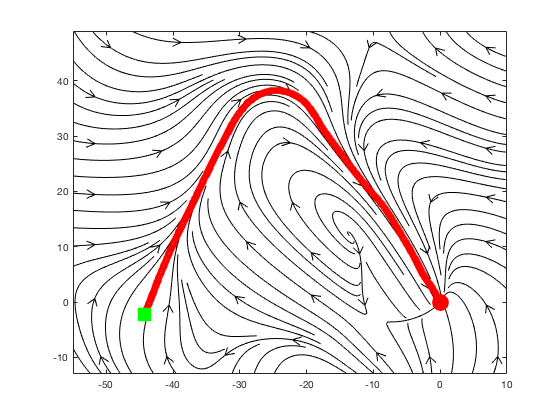}
     \includegraphics[height=4cm, width=0.24\linewidth]{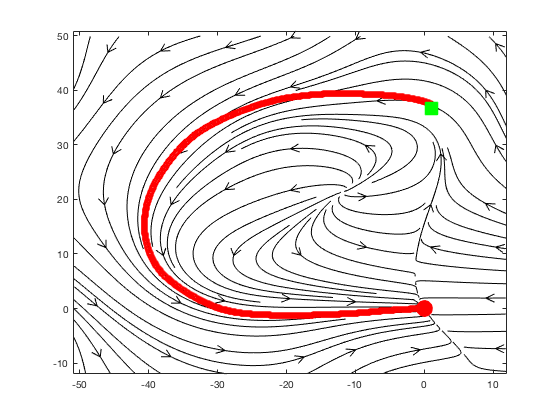}
     \includegraphics[height=4cm, width=0.24\linewidth]{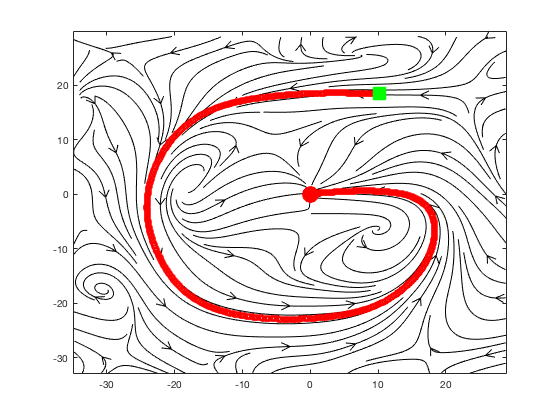}
    \includegraphics[height=4cm, width=0.24\linewidth]{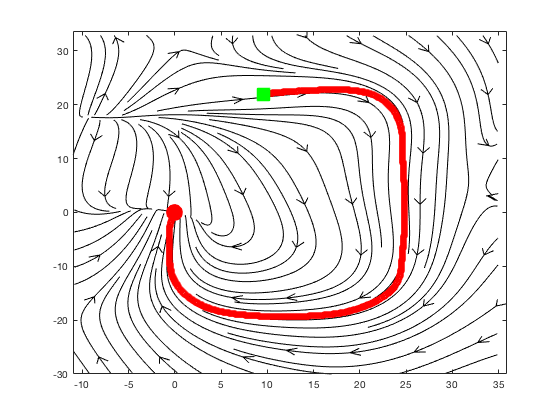}\\
    \includegraphics[height=4cm, width=0.24\linewidth]{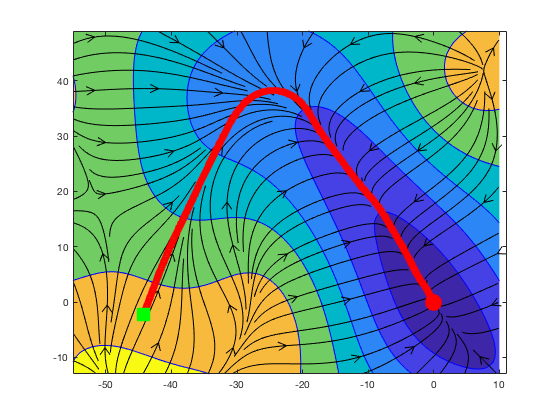}
     \includegraphics[height=4cm, width=0.24\linewidth]{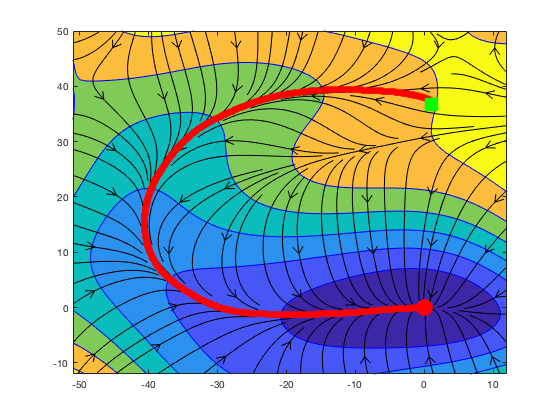}
     \includegraphics[height=4cm, width=0.24\linewidth]{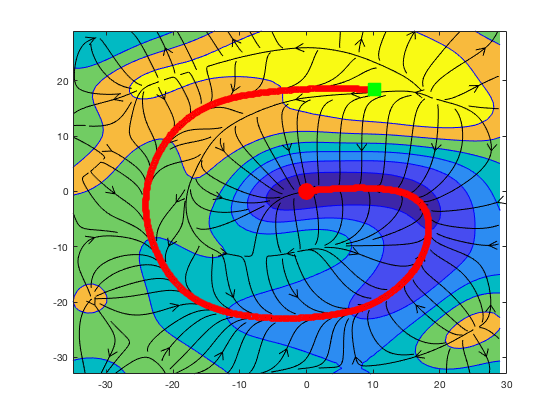}
    \includegraphics[height=4cm, width=0.24\linewidth]{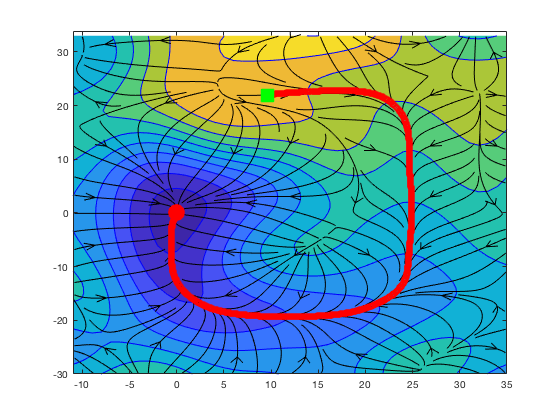}
    \caption{Contracting Vector Fields learnt for Gaussian Separable (top), and Curl-free random feature maps (bottom). For the latter, the contours correspond to the associated potential field and streamlines are gradient flows.}
    \label{fig:cvfs}
\end{figure*}

Figures 3 and 4 show vector fields learnt by our methods on J and Angle shapes. It can be seen that the Gaussian separable and curl-free kernels induce vector fields representing qualitatively different dynamics. In both cases, one can see that sizeable contraction tubes are setup around the demonstrations. For the curl-free kernel, we can also compute a potential field so that the vector field may be interpreted as a gradient flow (Eqn.~\ref{eq:potential}) with respect to it. 

\subsection{Comparison Metrics}
We now describe how we can compare the efficacy of different dynamical systems for imitation learning tasks. Each trained dynamical system, $\dot{\x} = f(\x)$ is integrated from a given starting condition for either a certain time horizon or until the event that the state $\x$ is sufficiently close to the goal. We use a high quality integrator\footnote{\url{https://www.mathworks.com/help/matlab/ref/ode45.html}} to ensure integration errors do not influence the comparisons. The trajectories generated by the dynamical system are evaluated with respect to three broad criteria: 
\begin{compactitem}
\item {\bf Reproduction Accuracy}: How well does the dynamical system reproduce positions and velocities in training and test demonstrations, when started from same initial conditions and integrated for the same amount of time as the human movement duration ($T$).   Specifically, we measure reproduction error with respect to $m$ demonstration trajectories as,
\begin{eqnarray}
\textrm{TrajectoryError} = \frac{1}{m}\sum_{i=1}^m\frac{1}{T_i}\sum_{t=0}^{T_i} \|\x^i_t - \hat{\x}^i_t\|_2\label{eq:traj_err}\\
\textrm{VelocityError} = \frac{1}{m}\sum_{i=1}^m\frac{1}{T_i}\sum_{t=0}^{T_i} \|\dot{\x}^i_t - \hat{\dot{\x}}^i_t\|_2\label{eq:vel_err}
\end{eqnarray}
The metrics {\it TrainingTrajectoryError, TestTrajectoryError, TrainingVelocityError, TestVelocityError} report these measures with respect to training and test demonstrations. At the end of the integration duration ($T$), we also report {\it DistanceToGoal}: how far the final state is from the goal (origin).  Finally, to account for the situation where the learnt dynamics is somewhat slower than the human demonstration, we also generate trajectories for a much longer time horizon ($30T$) and report ${\it DurationToGoal}$: the time it took for the state to enter a ball of radius $1mm$ around the goal, and how often this happened for the $7$ demonstrations ({\it NumReachedGoal}). 

\item {\bf Stability}:  To measure stability properties, we evolve the dynamical system from $16$ random positions on a grid enclosing the demonstrations for a long integration time horizon (30T). We report the fraction of trajectories that reach the goal ($GridFraction$); the mean duration to reach the goal when that happens ($GridDuration$); the mean distance to the Goal ($GridDistanceToGoal$) and the closest proximity of the generated trajectories to a human demonstration, as measured using Dynamic Time Warping Distance ($GridDTWD$)~\cite{keogh2005exact} (since in this case trajectories are likely of lengths different from demonstrations).
\item {\bf Training and Integration Speed}: We measure both training time as well as time to evaluate the dynamical system which translates to integration speed.
\end{compactitem}

\subsection{Comparison with DMP, SEDS and CLFDM}
{\small
\setlength{\tabcolsep}{0.8pt}

\begin{table*}[t]                                                    \begin{minipage}{.5\linewidth}
\centering                                                            
\begin{tabular}{|c|c|c|c H H}                                        
\hline                                                                
 & DMP & SEDS & CLFDM & CVF-CF & CVF-GS \\                  
\hline                                                                
TrainingTrajectoryError & 4.10 & 7.18 & 4.93 & 6.70 & 5.84 \\         
\hline                                                                
TrainingVelocityError & 7.40 & 14.62 & 10.99 & 17.70 & 15.25 \\       
\hline                                                                
TestTrajectoryError & 5.53 & 4.58 & 12.24 & 6.09 & 4.16 \\            
\hline                                                                
TestVelocityError & 8.74 & 11.43 & 15.50 & 15.50 & 13.01 \\           
\hline                                                                
DistanceToGoal & 3.59 & 3.25 & 6.70 & 6.65 & 3.99 \\                  
\hline                                                                
DurationToGoal & - & 3.89 & 4.34 & 4.76 & 3.50 \\                     
\hline                                                                
NumberReachedGoal & 0.00 & 7.00 & 7.00 & 7.00 & 7.00 \\               
\hline                                                                
GridDuration & 5.91 & 3.72 & 9.73 & 3.40 & 2.41 \\                    
\hline                                                                
GridFractionReachedGoal & 0.06 & 1.00 & 1.00 & 1.00 & 1.00 \\         
\hline                                                                
GridDistanceToGoal & 3.33 & 1.00 & 1.00 & 1.00 & 1.00 \\              
\hline               
GridDTWD & 24493.27 & 13865.75 & 14503.13 & 11311.29 & 8865.90 \\     
\hline  
TrainingTime & 0.05 & 2.10 & 2.82 & 7.58 & 3.29 \\ 
\hline
IntegrationSpeed & 0.21 & 0.06 & 0.15 & 0.15 & 0.04 \\ 
\hline
\end{tabular}%
\begin{tabular}{|c|c|}
     \hline
     CVF-CF & CVF-GS \\
     \hline
     5.44 & 5.21 \\
     \hline
     11.92 & 10.45 \\
     \hline
     4.17 & 3.19 \\
     \hline
     11.27 & 9.05 \\
     \hline
     3.69 & 2.04 \\
     \hline
     3.29 & 3.01 \\
     \hline
     7.00 & 7.00 \\
     \hline
     1.73 & 2.30 \\
     \hline
     1.00 & 1.00 \\
     \hline
     1.00 & 1.00 \\
     \hline
     12837.89 & 9451.88 \\
     \hline
     23.64 & 23.90 \\
     \hline
     0.04 & 0.04 \\
     \hline
     \end{tabular}
\caption{Angle}                                                       
\label{table:Angle}                                                   \end{minipage}
\begin{minipage}{.5\linewidth}
\centering                                                      
\begin{tabular}{|c|c|c|cHH}                                          
\hline                                                                  
 & DMP & SEDS & CLFDM & CVF-CF & CVF-GS \\                    
\hline                                                                  
TrainingTrajectoryError & 4.40 & 8.29 & 6.90 & 8.83 & 10.62 \\         
\hline                                                                  
TrainingVelocityError & 7.32 & 14.73 & 12.65 & 16.94 & 21.27 \\         
\hline                                                                  
TestTrajectoryError & 6.99 & 9.08 & 5.82 & 10.22 & 8.83 \\             
\hline                                                                  
TestVelocityError & 9.43 & 12.59 & 10.89 & 19.19 & 19.63 \\             
%
\hline                                                                  
DistanceToGoal & 3.76 & 1.30 & 1.92 & 5.66 & 6.17 \\                    
\hline                                                                  
DurationToGoal & - & 3.71 & 3.39 & 4.71 & 4.12 \\                      
\hline                                                                  
NumberReachedGoal & 0.00 & 7.00 & 2.00 & 7.00 & 7.00 \\                 
\hline                                                                  
GridDuration & - & 2.08 & 1.40 & 2.69 & 2.35 \\                        
\hline                                                                  
GridFractionReachedGoal & 0.00 & 1.0 & 0.38 & 1.00 & 1.00 \\           
\hline                                                                  
GridDistanceToGoal & 3.96 & 1.0 & 2.22 & 1.00 & 1.00 \\                
\hline          
GridDTWD & 25679.59 & 14652 & 121312.43 & 13680.30 & 12797.03 \\     
\hline 
TrainingTime & 0.02 & 5.34 & 6.11 & 9.56 & 7.28 \\   
\hline                                                                  
IntegrationSpeed & 0.21 & 0.03 & 0.04 & 0.17 & 0.06 \\
\hline
\end{tabular}%
\begin{tabular}{|c|c|}
     \hline
     CVF-CF & CVF-GS \\
     \hline
     12.00 & 8.87 \\
     \hline
     19.93 & 15.55 \\
     \hline
     8.13 & 11.17 \\
     \hline
     11.83 & 16.40 \\
     \hline
     0.27 & 3.00 \\
     \hline
     3.00 & 3.46 \\
     \hline
     6.00 & 7.00 \\
     \hline
     1.38 & 3.33 \\
     \hline
     1.00 & 1.00 \\
     \hline
     1.00 & 1.00 \\
     \hline
     15251.28 & 10944.76 \\
     \hline
     24.59 & 24.01 \\
     \hline
     0.05 & 0.03 \\
     \hline
     \end{tabular}
\caption{CShape}                                                        
\label{table:CShape}                                                \end{minipage}    \\
 \begin{minipage}{.5\textwidth}
 
\centering                                                            
\begin{tabular}{|c|c|c|cHH}                                        
\hline                                                                
 & DMP & SEDS & CLFDM & CVF-CF & CVF-GS \\                  
\hline                                                                
TrainingTrajectoryError & 3.81 & 7.84 & 6.41 & 7.60 & 7.54 \\         
\hline                                                                
TrainingVelocityError & 5.85 & 12.12 & 9.83 & 12.64 & 13.56 \\        
\hline                                                                
TestTrajectoryError & 5.56 & 10.44 & 11.00 & 10.84 & 7.86 \\          
\hline                                                                
TestVelocityError & 7.04 & 14.84 & 16.07 & 15.14 & 13.36 \\           
\hline                                                                
DistanceToGoal & 1.63 & 0.18 & 3.72 & 0.06 & 3.66 \\                  
\hline                                                                
DurationToGoal & - & 4.86 & 15.08 & 4.08 & 6.10 \\                    
\hline                                                                
NumberReachedGoal & 0.00 & 7.00 & 7.00 & 7.00 & 7.00 \\               
\hline                                                                
GridDuration & - & 2.78 & 12.68 & 1.92 & 4.06 \\                      
\hline                                                                
GridFractionReachedGoal & 0.00 & 1.00 & 1.00 & 1.00 & 1.00 \\         
\hline                                                                
GridDistanceToGoal & 1.71 & 1.00 & 1.00 & 1.00 & 1.00 \\              
\hline        
GridDTWD & 19085.48 & 11493 & 10928.89 & 12522.51 & 9109.60 \\     
\hline                                                                
TrainingTime & 0.02 & 14.43 & 3.01 & 12.13 & 16.77 \\      
\hline                                                                
IntegrationSpeed & 0.18 & 0.08 & 0.10 & 0.21 & 0.04 \\   
\hline
\end{tabular}%
\begin{tabular}{|c|c|}
     \hline
     CVF-CF & CVF-GS \\
     \hline
     10.07 & 6.83 \\
     \hline
     16.38 & 11.71 \\
     \hline
     5.82 & 6.29 \\
     \hline
     8.90 & 10.04 \\
     \hline
     0.18 & 0.11 \\
     \hline
     4.24 & 4.91 \\
     \hline
     7.00 & 7.00 \\
     \hline
     1.74 & 3.07 \\
     \hline
     0.94 & 1.00 \\
     \hline
     2.93 & 1.00 \\
     \hline
     17767.83 & 10637.81 \\
     \hline
     23.23 & 22.85 \\
     \hline
     0.10 & 0.03 \\
     \hline
     \end{tabular}
\caption{GShape}                                                      
\label{table:GShape}                                                  

\end{minipage}
 \begin{minipage}{.5\textwidth}
 
\centering                                                            
\begin{tabular}{|c|c|c|cHH}                                        
\hline                                                                
 & DMP & SEDS & CLFDM & CVF-CF & CVF-GS \\                  
\hline                                                                
TrainingTrajectoryError & 3.09 & 14.37 & 4.96 & 4.03 & 4.53 \\        
\hline                                                                
TrainingVelocityError & 6.05 & 22.88 & 9.60 & 9.53 & 10.12 \\         
\hline                                                                
TestTrajectoryError & 5.47 & 16.85 & 4.95 & 5.11 & 5.72 \\            
\hline                                                                
TestVelocityError & 8.49 & 25.76 & 9.81 & 11.40 & 12.67 \\            
\hline                                                                
DistanceToGoal & 2.24 & 0.00 & 6.07 & 1.68 & 3.04 \\                  
\hline                                                                
DurationToGoal & - & 2.05 & 3.49 & 3.58 & 3.90 \\                     
\hline                                                                
NumberReachedGoal & 0.00 & 7.00 & 5.00 & 7.00 & 7.00 \\               
\hline                                                                
GridDuration & 100.73 & 1.21 & 2.18 & 1.56 & 2.19 \\                  
\hline                                                                
GridFractionReachedGoal & 0.44 & 1.00 & 0.69 & 0.94 & 0.94 \\         
\hline                                                                
GridDistanceToGoal & 1.42 & 1.00 & 3.61 & 3.24 & 1.55 \\              
\hline       
GridDTWD & 29568.10 & 12866 & 15403.52 & 23123.40 & 12021.50 \\    
\hline                                                                
TrainingTime & 0.01 & 18.15 & 2.66 & 23.96 & 5.62 \\   
\hline                                                                
IntegrationSpeed & 0.19 & 0.11 & 0.09 & 0.12 & 0.02 \\  
\hline
\end{tabular}%
\begin{tabular}{|c|c|}
     \hline
     CVF-CF & CVF-GS \\
     \hline
     8.62 & 4.10 \\
     \hline
     14.78 & 9.31 \\
     \hline
     4.89 & 5.42 \\
     \hline
     11.28 & 11.51 \\
     \hline
     4.07 & 2.28 \\
     \hline
     3.49 & 3.79 \\
     \hline
     7.00 & 7.00 \\
     \hline
     1.56 & 3.62 \\
     \hline
     0.94 & 1.00 \\
     \hline
     3.21 & 1.00 \\
     \hline
     19610.23 & 10009.24 \\
     \hline
     22.59 & 25.20 \\
     \hline
     0.04 & 0.02 \\
     \hline
     \end{tabular}
\caption{JShape}                                                      
\label{table:JShape}                                                \end{minipage}  
\end{table*}                 
}
In Tables~\ref{table:Angle},~\ref{table:CShape},~\ref{table:GShape} and~\ref{table:JShape} we report comprehensive comparisons against 3 methods proposed in the literature: DMPs~\cite{ijspeert2013dynamical}, SEDS~\cite{khansari2011learning} and CLFDM~\cite{CLFDM}. We use publicly available implementations for these methods.
 
Our methods are abbreviated CVF-CF and CVF-GS, standing for contracting vector fields defined by curl-free and Gaussian separable random feature maps of Eqn.~\ref{eq:random_features_cf} and~\ref{eq:random_features_gs} respectively. In all experiments, we use the SCS solver described in section~\ref{sec:lmi_solver} for training, with $100$ or $200$ random features, bandwidth $\sigma$ set to $5$ or $10$, $\lambda$ tuned over $\{0.001, 0.01, 0.1\}$ and $\tau=0.0$. Contraction constraints were imposed on $250$ points. For each shape, we use $4$ demonstrations for training and $3$ demonstrations for testing. CVF and DMPs are trained on a single trajectory which is the average of the $4$ training demonstrations. 

Overall, it may be seen that CVF is highly competitive in comparison to other methods. On $3$ of the $4$ shapes, its mean trajectory error is among the lowest two. In terms of stability, across $64$ runs on the $4$ datasets starting from random points on a grid around the demonstrations, CVF methods return the best mean DTWD on all $4$ datasets.  We encountered only one case where an initial condition not converge to the goal. The time taken to reach the goal was also similar to the demonstration duration.  While CVF training time is slightly greater than other methods, SCS also returns good solutions with looser termination criteria making training time comparable to other methods. CVF inference speed is also in the same ballpark as other techniques. We noted that while DMP returns excellent trajectory and velocity errors, the norm of the velocity often shrinks prematurely considerably slowing down in the vicinity of the goal. We also noted that SEDS required data smoothing to return competitive results while being sensitive to initialization, since it uses a non-convex procedure to fit Gaussian mixture models. Unlike CLFDM and SEDS, CVF training involves convex optimization which if feasible has a unique globally optimal solution.

\subsection{Scalability wrt Dimensions and Random Features}
In the Figure below, we report how training time scales with increasing dimensionality $n$ and model capacity measured in terms of the number of random features $D$.  We embedded the "S" shape data into a random two-dimensional subspace of $\reals^n$ with $n=2, 4, 8, 16, 32$ for $D=300$ curl-free random features. In another experiment, we fixed $n=4$ and increased $D$ from $100$ to $500$. The SCS solver was run for $6000$ iterations. In all runs, the solver approached convergence achieving primal/dual residuals and duality gaps of the order of $10^{-6}$ to $10^{-4}$. In the regimes tested, the solver shows linear scaling with respect to $D$ and is superlinear with respect to $n$. These results confirm that our approach is practical for learning higher dimensional dynamical systems.
\begin{figure}[h]
    \centering
    \includegraphics[height=3.5cm, width=\linewidth]{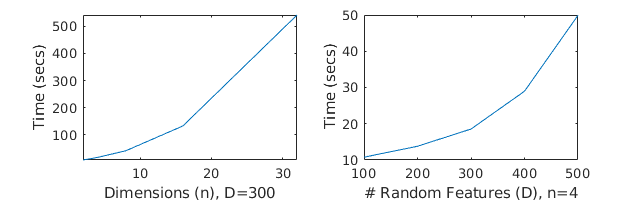}
    \label{fig:scalability}
\end{figure} \vspace{-0.7cm}

\subsection{Conclusion}
 Our approach is highly competitive with prior methods in learning-from-demonstration benchmarks, and brings together contraction analysis of nonlinear systems, vector-valued RKHS methods for statistical learning and random embeddings for fast convex optimization. Natural extensions of our work include: learning with more general contraction metrics, exploiting modularity properties~\cite{slotine2003modular}, exploring dynamic obstacle avoidance and coupling our approach with deep nets and perception modules for solving robotics tasks. 
\bibliographystyle{plainnat}
\bibliography{lyapunov}


\end{document}